\theoremstyle{plain}
\newtheorem{theorem}{Theorem}[section]
\newtheorem{proposition}[theorem]{Proposition}
\theoremstyle{definition}
\theoremstyle{remark}
\icmltitlerunning{When can in-context learning generalize out of task distribution?}
\begin{document}

\twocolumn[
\icmltitle{When can in-context learning generalize out of task distribution?}



\icmlsetsymbol{equal}{*}

\begin{icmlauthorlist}
\icmlauthor{Chase Goddard}{princeton}
\icmlauthor{Lindsay M. Smith}{princeton}
\icmlauthor{Vudtiwat Ngampruetikorn}{equal,sydney}
\icmlauthor{David J. Schwab}{equal,cuny}
\end{icmlauthorlist}

\icmlaffiliation{princeton}{Joseph Henry Laboratories of Physics, Princeton University, Princeton, NJ, USA}
\icmlaffiliation{sydney}{School of Physics, University of Sydney, Sydney, Australia}
\icmlaffiliation{cuny}{Initiative for the Theoretical Sciences, The Graduate Center, CUNY, New York, NY, USA}

\icmlcorrespondingauthor{Chase Goddard}{cgoddard@princeton.edu}

\icmlkeywords{In-context learning, Generalization, OOD, out-of-distribution, Machine Learning, ICML}

\vskip 0.3in
]



\printAffiliationsAndNotice{\icmlEqualContribution} 

\begin{abstract}
  In-context learning (ICL) is a remarkable capability of pretrained transformers that allows models to generalize to unseen tasks after seeing only a few examples. We investigate empirically the conditions necessary on the pretraining distribution for ICL to emerge and generalize \emph{out-of-distribution}. Previous work has focused on the number of distinct tasks necessary in the pretraining dataset. Here, we use a different notion of task diversity to study the emergence of ICL in transformers trained on linear functions. We find that as task diversity increases, transformers undergo a transition from a specialized solution, which exhibits ICL only within the pretraining task distribution, to a solution which generalizes out of distribution to the entire task space. We also investigate the nature of the solutions learned by the transformer on both sides of the transition, and observe similar transitions in nonlinear regression problems. We construct a phase diagram to characterize how our concept of task diversity interacts with the number of pretraining tasks. In addition, we explore how factors such as the depth of the model and the dimensionality of the regression problem influence the transition.
\end{abstract}

\section{Introduction}
The ability of transformers \cite{NIPS2017_3f5ee243} to do few-shot learning from examples seen in their context is a striking phenomenon exhibited by modern large language models called \textit{in-context learning} (ICL) \cite{gpt3}. ICL has been extensively studied \cite{raventos2023pretraining,lu2024asymptotictheoryincontextlearning, garg_what_2023, chan2022, singh2023the} and enables models to solve certain new tasks without re-training. Of particular interest is how the ability for transformers to perform ICL arises from pretraining and what the limits of ICL generalization are: What conditions must be met in order for ICL to emerge and generalize \textit{outside} of the pretraining distribution?

Prior work \cite{raventos2023pretraining, lu2024asymptotictheoryincontextlearning, he2024learning} has focused on understanding how the number of pretraining tasks affects the ability of the model to generalize to unseen tasks (generated from the same distribution as the pretraining tasks). Here, we ask a related but distinct question: If a model is pretrained only on tasks from a \textit{subset} of the full task space, what conditions are necessary for it to generalize to the rest of the space? We think of this question as asking about the \textit{out-of-distribution} (OOD) performance of models trained to do in-context learning. This question prompts us to consider a more general notion of \textit{task diversity} that depends not only on task enumeration but also on how different or similar they are.
Sampling a distribution with many similar tasks has the potential to induce the model towards a more specialized ICL solution that performs well only on novel tasks \textit{within} its pretraining distribution. However, we observe that transformers trained to do ICL of linear functions undergo a transition from a specialized solution to one that generalizes over the full task space as we increase the degree of task diversity. This phenomenon of out-of-distribution \textit{task generalization} sheds new light on in-context learning behavior\footnote{Code available at \href{https://github.com/cwgoddard/OOD_ICL}{https://github.com/cwgoddard/OOD\_ICL}}.

\subsection{Related work}
Our investigation into the effects of task diversity on the emergence of ICL was motivated by the results of \citet{raventos2023pretraining} on pretraining task diversity for linear regression tasks in-context, where they find that the number of pretraining tasks impacts the emergence of ICL \textit{in-distribution}. 

Here, we investigate in-context learning of linear functions in the \textit{out-of-distribution} setting, and investigate the \textit{domain generalization} performance of transformers. Our experimental setups are intentionally similar to those in \citet{raventos2023pretraining} in order to aid comparison between in- and out-of-distribution ICL behavior. See e.g., \citet{gulrajani2020searchlostdomaingeneralization, arjovsky2021distributiongeneralizationmachinelearning,liu2023outofdistributiongeneralizationsurvey,yang2023outofdistribution} for more background on domain generalization. 

Similarly, \citet{he2024learning} look at the setting of modular arithmetic ICL tasks and analyze how the number of pretraining tasks affects generalization. \citet{lu2024asymptotictheoryincontextlearning} provide an exactly solvable model of linear attention transformers for linear regression ICL tasks and empirically show agreement of their theory with traditional transformers. Other work has analyzed how transformers implement an internal gradient-descent algorithm to learn tasks in-context, shedding light on the mechanistic properties behind ICL \cite{akyurek2023what, ahn2023transformers, pmlr-v202-von-oswald23a}. \citet{fu2024transformers} argue that transformers must be implementing second-order optimization methods instead in order to solve ICL linear regression tasks. \citet{chan2022,edelman2024the,singh2023the,nguyen2024differentiallearningkineticsgovern} investigate how properties of the data and optimization dynamics impact the emergence of ICL throughout the course of training. 

\begin{figure*}[!ht]
    \vskip 0.2in
    \begin{subfigure}[t]{0.54\textwidth}
        \centering
        \includegraphics{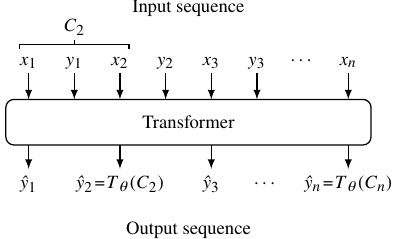} 
        \caption{}
    \end{subfigure}
    \begin{subfigure}[t]{0.44\textwidth}
    \centering
    \includegraphics{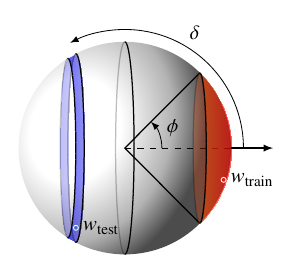} 
    
    
    
    \caption{} \label{fig:train-dist}
    \end{subfigure}
        
        
    \caption{\label{fig:setup}%
    \textbf{Testing ICL generalization via task similarity.} 
    \textbf{A:} The transformer takes as input a sequence of pairs $\{x_i,y_i\}_{i=1}^n$ and is trained to predict $y_k$ from a context $C_k = \{x_1, y_1, \ldots, x_k\}$. The elements $x_i$ and $y_i$ are related linearly by a task $w$ via $y_i = w^Tx_i + \epsilon_i$. 
    \textbf{B:} The training tasks $w_\mathrm{train}$ are drawn from a hyperspherical cap with half-angle $\phi$ (with $\phi=180^\circ$ corresponding to the entire hypersphere).
    The test tasks $w_\mathrm{test}$ are drawn from a hyperspherical band of width $\Delta\delta$ starting an angle $\delta$ away from the ``pole'' of the sphere.}
\end{figure*}
\subsection{Distribution Shift \& Generalization}

 In-context learning is a powerful capability of language models, and in order to build trust in models, we should better understand how well such capabilities extend to novel tasks beyond those in the training data.\footnote{We note that another perspective is to train on as large and diverse a pretraining dataset as possible so nothing is out of distribution, but there will always be novel tasks that have not been encountered before.} In addition to generalizing to tasks that interpolate between those seen in training, we'd like models that generalize to genuinely novel tasks. Whether this is possible depends on the nature of the distribution shift, and this question is in the purview of out-of-distribution generalization but at the level of tasks instead of samples.

In this work, we characterize when transformers develop in-context learning behaviors that are robust to task distribution shift. Since ICL can be thought of as generalization to novel tasks, here we ask when models succeed at \emph{domain generalization} with respect to tasks. Indeed, a common assumption in the literature on out-of-distribution generalization \cite{gulrajani2020searchlostdomaingeneralization, arjovsky2021distributiongeneralizationmachinelearning, liu2023outofdistributiongeneralizationsurvey, yang2023outofdistribution} is the so-called “covariate shift” assumption where the conditional distribution of the label given the input is held fixed, but the distribution of inputs changes at test time. In the context of tasks, we consider sampling tasks during training from a subset of the task space and ask about generalization to tasks outside of that subset. All tasks share the common property of being linear relationships, and thus we are studying domain generalization of tasks.

\subsection{Contributions}
Our core contributions are as follows: 
\begin{itemize}
    \item We train transformers to exhibit ICL of linear functions with weight vectors drawn from a subset of the unit hypersphere. As the size of this subset increases, we observe a transition from specialized models, which perform well only on the training portion of the hypersphere, to models that generalize out-of-task-distribution to the entire hypersphere. 
    \item We investigate the nature of the solutions found by our transformers, and find that specialized solutions outperform optimal Bayesian solutions to the regression problem on small numbers of examples. In contrast, transformers that generalize to the entire hypersphere exhibit performance similar to optimal solutions. 
    \item We examine how two notions of task diversity (number of tasks and task similarity\footnote{For linear problems, it is natural to choose the inner product between tasks $w_1^Tw_2$ as a similarity measure.}) interact, and construct a phase diagram that reveals three distinct regimes of ICL generalization. 
    \item We show that specialization-generalization transitions also occur in nonlinear regression problems, suggesting that the phenomenon may be a general feature of ICL in transformers. 
\end{itemize}

\section{Training setup and task distribution geometry}

\paragraph{ICL of linear functions:} We investigate the ability of transformers to perform in-context learning of linear functions when pretraining tasks are drawn from distributions with varying levels of \textit{task diversity}.
Specifically, each \emph{task} is a linear map in $d$ dimensions, $w\in \mathbb{R}^d$, and we control task diversity by sampling tasks from hyperspherical caps of varying half-angles. 
%
The transformer takes as input a sequence of up to $n$ pairs $\{x_1,y_1,\ldots,x_n\}$, where $y_i = w^Tx_i + \epsilon_i$, with $x_i \sim \mathcal{N}(0, I_d)$ and $\epsilon_i \sim \mathcal{N}(0,\sigma^2)$. 

\paragraph{Pretraining task distribution:} We define a family of \textit{task distributions} parameterized by $\phi \in [0,\pi]$ (See Fig \ref{fig:setup}B). We take $S^{d-1}(\phi)$ to be a section of the surface of the hypersphere in $d$ dimensions, i.e. $S^{d-1}(\phi) = \{w \in S^{d-1} \mid \text{angle}(w,v) \leq \phi\}$, with $v\in \mathbb{R}^d$ a fixed vector. We then define the task distribution as a uniform distribution on this spherical cap, i.e., $p_\phi(w) \equiv \text{Unif}(S^{d-1}(\phi))$. For details on how to effectively sample from such a distribution, see section \ref{sect:sampling} in the appendix. 

\paragraph{Pretraining:} During pretraining, the transformer $T_\theta$ is optimized to minimize the mean squared error (MSE) between a \textit{context} of data $C_k \equiv \{x_1, y_1, \ldots, x_k\}$ and the target $y_k$. During pretraining, the tasks $w$ are drawn i.i.d.\ for each context from $p_\phi(w)$. We use AdamW \cite{adamw} to optimize the MSE,
\begin{equation}
    L_\text{train}(\theta) = \mathbb{E}_{w\sim p_\phi}\qty[\frac{1}{n}\sum\nolimits_{k=1}^n \qty(T_\theta(C_k) - y_k)^2].
\end{equation}

\begin{figure}[!t]
    \vskip 0.2in
    \centering
    \includegraphics{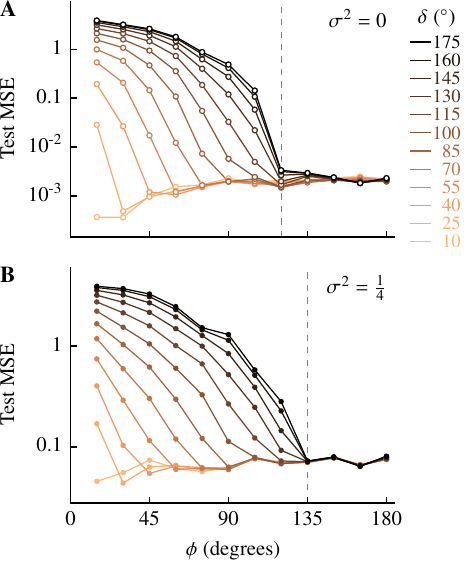}
    \caption{\textbf{%
        Task distribution diversity induces a transition from specialized to general-purpose ICL.} 
        \textbf{A:} Test error in $\Delta\delta = 5^\circ$ bands (see Fig \ref{fig:setup}) for transformers pretrained to do in-context learning of linear functions with pretraining task distributions $p_\phi(w)$. For distributions with $\phi \lesssim 120^\circ$, the transformer learns a specialized solution that performs well on unseen tasks drawn from the $p_\phi(w)$, but fails for tasks outside this distribution. However, for pretraining distributions with $\phi \gtrsim 120^\circ$, the transformer learns a solution that performs well for all test angles $\delta$. Here, the label noise is $\sigma^2 = 0$.
        \textbf{B:} With $\sigma^2 = 0.25$, we still observe a transition from a specialized to a generic solution, but the transition point has moved to $\phi \approx 135^\circ$. The vertical axis measures the excess test error above the noise floor set by $\sigma^2$.
    }
    \label{fig:transition}
\end{figure}

\paragraph{Test task distribution:} We evaluate the performance of the transformer over a family of task distributions parameterized by $\delta,\Delta\delta \in [0,\pi]$ (See Fig \ref{fig:setup}B). We define the hyperspherical band starting at angle $\delta$ with width $\Delta\delta$ to be the set $B^{d-1}(\delta, \Delta\delta) = \{w \in S^{d-1} \mid \delta \leq \text{angle}(w,v) \leq \delta + \Delta\delta\}$, with $v$ some fixed vector. The test task distribution is then uniform over this set: $p_{\delta, \Delta\delta}(w) = \text{Unif}(B^{d-1}(\delta, \Delta\delta))$. For more on how to sample from these bands, see section \ref{sect:sample-min} in the appendix. 

\paragraph{Evaluation:} We evaluate models by computing the MSE between the full context $C_n$ and the final target $y_n$. During test time, we draw $w$ i.i.d. for each context from $p_{\delta,\Delta\delta}$:
\begin{equation}
    L_\text{test}(\theta) = \mathbb{E}_{w\sim p_{\delta,\Delta\delta}}\qty[ \qty(T_\theta(C_n) - y_n)^2]
\end{equation} 

\section{Experimental Results}
Unless stated otherwise, we study $d=10$ dimensional regression with $n=50$ examples in each context. We use a GPT-2 style transformer \cite{radford2019language} with learned positional embeddings, a hidden dimension of $d_h = 128$, 10 layers (except in Fig \ref{fig:depth}), and 8 attention heads. We use a learned linear embedding to map $x_i$ and $y_i$ to the hidden dimension $d_h=128$. The target values $y_i$ are padded with $d-1$ zeroes. For further training details, see Appendix section \ref{sect:more-expt}.

During pretraining, we train 12 models over pretraining distributions $p_\phi(w)$ for $\phi \in [15^\circ, 180^\circ]$ in $15^\circ$ increments. We observe that repeated runs, with different initializations and trained on data generated from different sampled tasks $w\sim p_\phi$, yield consistent results\footnote{During training, the loss generically ``plateaus,'' staying at a constant value for many steps before beginning to decrease. This phenomenon has been observed before, as in \cite{fu_breaking_2024}.} (see Fig \ref{fig:run2}). For small $\phi$, we also see the model go through two stages of specialization over the course of training (see Appendix \ref{sect:spec2}), the first of which is before this plateau.

\subsection{Modes of out-of-distribution generalization}
Before we investigate experimental results, it is instructive to discuss the ways in which models may (or may not) generalize out-of-task-distribution:
\begin{enumerate}
    \item Models may fail to meaningfully generalize out-of-task-distribution. (For example, when prompted to solve a task $w_\text{test}$ not in the support of the pretraining distribution $S^{d-1}(\phi)$, models may simply pick the task $w_\text{close} \in S^{d-1}(\phi)$ that is closest to $w_\text{test}$, but fail to generalize beyond this level.) 
    \item Models may generalize out-of-task-distribution, but only achieve maximum performance when pretrained on the \textit{entire} task space ($\phi = 180^\circ$) . 
    \item Models may generalize out-of-task-distribution in a \textit{sharp}    way with increasing $\phi$: there may be some $\phi_c <180^\circ$ such that models achieve maximum performance for all $\phi \geq \phi_c$.
\end{enumerate}
We view the last option as perhaps the most striking: it implies that models can achieve optimal performance with incomplete data. In the following, we show that transformers can generalize out-of-task-distribution in this fashion. 

\subsection{Specialization-generalization transition}

We show the results from evaluating these models on the test task distributions $p_{\delta, \Delta\delta}(w)$ in Fig \ref{fig:transition}. We pick $\Delta\delta = 5^\circ$ and examine a range of $\delta$ (see legend). For models with a pretraining task distribution with $\phi \lesssim 120^\circ$, we observe good test performance only within the portion of the hypersphere covered by the pretraining distribution, and performance degrades outside of this range. However, for models trained over $p_\phi(w)$ with $\phi \gtrsim 120^\circ$, we observe low test error for all $\delta$. In fact, the model performs similarly for all test angles $\delta$, suggesting that transformers learn a general-purpose solution in this regime. This occurs despite the fact that these models were trained using only data restricted to a \textit{subset} of the full task space. Notice also that even before the transition, models trained on a cap with $\phi\geq 45^\circ$ exhibit nontrivial out-of-distribution task generalization (Fig \ref{fig:bayes}). 


In Fig \ref{fig:transition}B, we see that the transition is sensitive to the level of label noise $\sigma^2$. For noisy regression with $\sigma^2 = \frac{1}{4}$, we see that the transition now occurs around $\phi\sim135^\circ$, but that the qualitative behavior of the transition is unchanged.

\begin{figure}
    \vskip 0.2in
    \centering
    \includegraphics{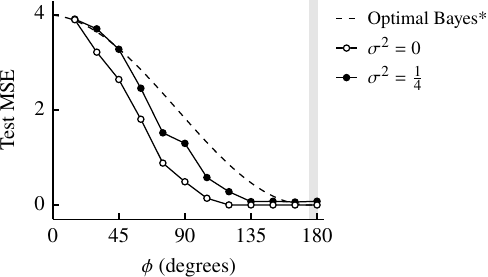}
    \caption{\textbf{%
        Pretrained transformers outperform Bayes-optimal solutions in out-of-task-distribution generalization.} For $\delta=175^\circ$ (shaded grey region), we plot the excess test loss for models with varying pretraining distributions. The dashed line shows the test error for the optimal in-task-distribution Bayesian solution (see section \ref{sec:optimal-bayes}).
    }
    \label{fig:bayes}
\end{figure}

\paragraph{Optimal Bayes \& OOD generalization\label{sec:optimal-bayes}}
How well do transformers generalize out-of-task-distribution? Here, we compare their out-of-task-distribution performance to that of the optimal Bayesian estimator for a given $p_\phi(w)$. Following \cite{raventos2023pretraining}, we derive an expression for the optimal estimator of $y_n$ under the prior $p_\phi(w)$. Observe that in order to minimize $L_\text{test}(\theta)$, the optimal estimator is given by the posterior mean of $y_n$ conditioned on the full context $C_n$, 
\begin{align*}
    \mathbb{E}[y_n|C_n] &= \int\dd{w}\dd{y_n}y_np(w,y_n | C_n) \\
    &=\int\dd{w}\dd{y_n}y_np(y_n|x_n,w)p(w|C_{n-1},y_{n-1}) \\
    &= \int\dd{w}w^Tx_np(w|C_{n-1},y_{n-1})\\
    &\equiv \hat w^Tx_n
\end{align*}
with
\begin{equation}
    \hat w = \frac{\int\dd{w}p(w)w\prod_{k=1}^{n-1}p(y_k|x_k,w)}{\int\dd{w}p(w)\prod_{k=1}^{n-1}p(y_k|x_k,w)}\label{eqn:opt-w}
\end{equation}
We now consider the choice $p(w) = p_\phi(w)$. Although the integrals in Eqn \ref{eqn:opt-w} are intractable in this case, it is clear that because $p_\phi(w)$ lacks support outside of $S^{d-1}(\phi),$ $\hat w$ must always be contained in $S^{d-1}(\phi)$. The optimal Bayesian estimator ($w_\text{OB})$ for this problem therefore fails to meaningfully generalize out-of-task-distribution, as the best one can hope for is to assign $\hat w$ to that vector $w_\text{OB} \in S^{d-1}(\phi)$ that is closest to the target task.

In Fig \ref{fig:bayes}, the test loss for $w_\text{OB}$ is given by the dashed line. Notice that the ability of pretrained transformers to outperform this optimal estimator out-of-task-distribution is therefore a direct consequence of the failure of these transformers to fit the optimal Bayesian solution. It remains an interesting avenue for further work to ask what form the prior $p(w)$ should take in order to give rise to the out-of-task-distribution performance we see here. 

\begin{figure}
    \vskip 0.2in
    \centering
    \includegraphics{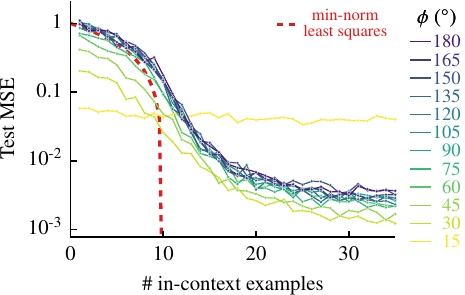}
    \caption{\textbf{Specialized ICL outperforms OLS for small context length.} We evaluate the models in-task-distribution for varying context lengths, and plot the performance of the transformer (solid) and ordinary least squares (dashed) for the same data. For low context length, the specialized solution learned by models with $\phi \lesssim 90^\circ$ outperforms OLS. For $\phi = 15^\circ$, the specialized solution is worse than OLS for large context length.
    }
    \label{fig:contextlength}
\end{figure}

\paragraph{Comparison to ordinary least squares}
We next investigate the solutions learned by the transformer on both sides of the transition. In Fig \ref{fig:contextlength}, we compare 
the performance of the transformer and ordinary least squares (OLS), solid and dashed curves, respectively. For short contexts, the specialized solution which is learned for $\phi \lesssim 120^\circ$ outperforms OLS within the task distribution. For $\phi \gtrsim 120^\circ$, the performance of the transformer is similar to OLS, except that the models' test error is not identically zero after $d=10$ examples, unlike the least-squares solution. This sheds light on the nature of the specialization occurring: by fitting a strong prior to the pretraining data, models with low $\phi$ sacrifice out-of-distribution performance, but this bias enables them to outperform the general-purpose solution (OLS) for low context length. See also Appendix Figs \ref{fig:noisy-context} \& \ref{fig:ood-context} for models' performance for varying context length in other settings.


\begin{figure}
    \vskip 0.2in
    \centering
    \includegraphics{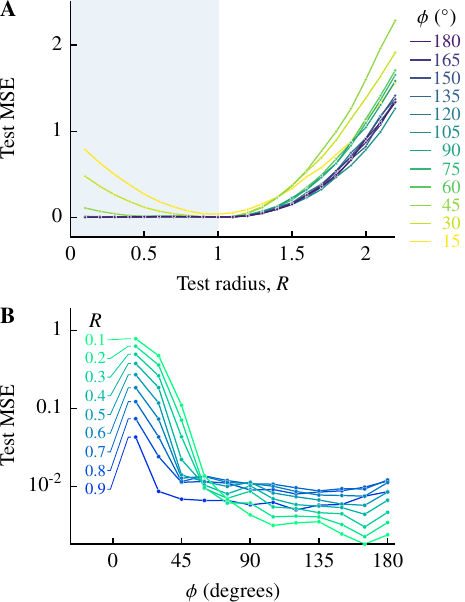}
    \caption{\textbf{Transformers trained to do ICL on the sphere generalize beyond it.} \textbf{A:} The test error for tasks drawn uniformly from subsets of a hypersphere of radius $R$, when a model is pretrained on tasks taken only from subsets of the \textit{unit} hypersphere. When $\phi \gtrsim 45^\circ$, the model generalizes to tasks with $R<1$ (shaded), despite being pretrained with $R=1$. \textbf{B:} Increasing task diversity drives generalization beyond the sphere: With sufficient task diversity ($\phi \gtrsim 45^\circ$), transformers generalize not only to OOD tasks \textit{on} the sphere (Fig \ref{fig:transition}), but also to OOD tasks \textit{within} it. 
    }
    \label{fig:radius}
\end{figure}


\begin{figure*}[!ht]
    \vskip 0.2in
    \centering
    \begin{subfigure}[t]{0.32\textwidth}
        \centering
        \includegraphics[width=1.0\linewidth]{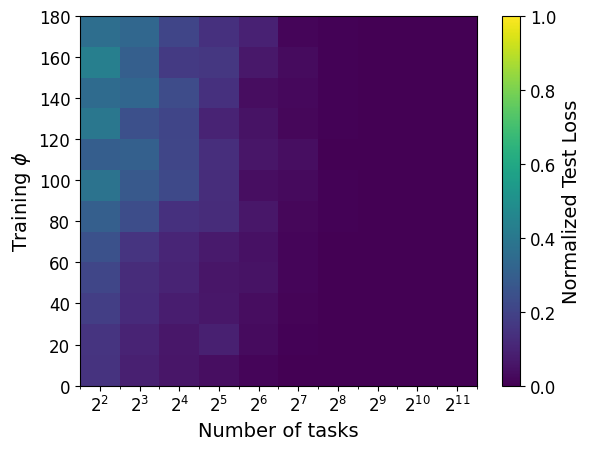} 
        \caption{} \label{fig:phase-diagram-0}
    \end{subfigure}
    \begin{subfigure}[t]{0.32\textwidth}
        \centering
        \includegraphics[width=1.0\linewidth]{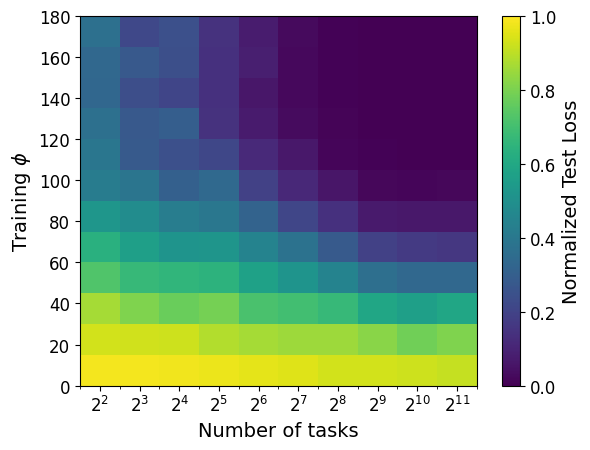} 
        \caption{} \label{fig:phase-diagram-175}
    \end{subfigure}
    \begin{subfigure}[t]{0.32\textwidth}
        \centering
        \includegraphics[width=1.0\linewidth]{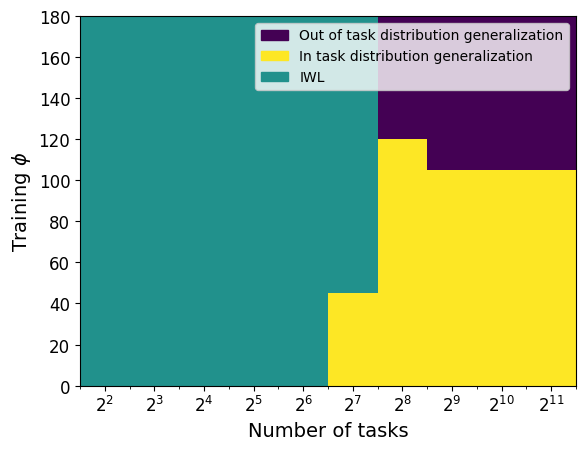} 
        \caption{} \label{fig:3phase}
    \end{subfigure}
    \caption{\textbf{Phase diagrams reveal three phases of generalization.} \textbf{A:} Phase diagram for in-task-distribution test loss ($\delta=0^\circ$). \textbf{B:} Phase diagram for out-of-task-distribution test loss ($\delta=175^\circ)$. Diagonal structure reveals interplay between two the measures of task diversity (Training $\phi$, number of tasks $N$). \textbf{C:} Combining the earlier two phase diagrams reveals three phases of in-context learning. \textit{In-weights learning, teal:} the model fits the training data but fails to generalize, both in- and out-of-task-distribution. \textit{In-task-distribution generalization, yellow:} The model generalizes within the support of the pretraining distribution, but fails to generalize out of distribution. \textit{Out-of-task-distribution generalization, purple:} The model generalizes well both in- and out-of-task-distribution. In constructing the phase diagram, we set the threshold between high and low generalization losses to $10^{-2}$.
    }
    \label{fig:phase-diagram}
\end{figure*}

\paragraph{Beyond the unit hypersphere}
What happens to the generalization ability of the model as the radius of the task distribution changes? We train several models on data generated from tasks on the surface of the unit hypersphere, and evaluate them on tasks drawn from spheres of varying radii. Each model is trained on tasks from $p_\phi(w)$ and evaluated on the equivalent distribution (with the same $\phi$) on a hypersphere with a different radius. In Fig \ref{fig:radius}, we observe that for $\phi \gtrsim 45^\circ$ the model is able to generalize perfectly to tasks with $R<1$ (shaded region), despite being trained only on tasks with $R=1$. Increasing task diversity therefore drives the model to generalize not only to new portions of the hypersphere, but also beyond the hypersphere entirely.

\subsection{Interplay between the two forms of task diversity\label{sect:phase}}
In order to examine the effect of both forms of task diversity (number of tasks and task similarity), we train 4 sets of 120 models (480 models total) with task similarity $\phi$ and number of tasks $N$ in the set: $(\phi, N) \in \{15^\circ, 30^\circ, \ldots, 180^\circ\}\times \{2^{2}, 2^{3}, \ldots, 2^{11}\}$. In Fig \ref{fig:phase-diagram-0}, we plot the resulting  \textit{in-task-distribution} loss averaged over the 4 sets: the loss for a test angle $\delta$ between $0^\circ$ and $5^\circ$ (these test angles are \textit{always} in the training task distribution). In order to more effectively compare across different $\phi$, we normalize the loss for each $\phi$ by the maximum possible loss achievable from a predictor $\hat w$ and target $w^\star$ in $S^{d-1}(\phi)$. Without such a normalization scheme, models with small $\phi$ would trivially outperform those with larger $\phi$, since even an \textit{arbitrary} choice of estimator $\hat w\in S^{d-1}$ cannot be too far from the target $w^\star$. We see that models with low $N$ and large $\phi$ perform poorly in-distribution, suggesting that the density of tasks may be important. For a more detailed analysis of the in-distribution performance of these models, see Appendix \ref{sect:dmmse}. 

In Fig \ref{fig:phase-diagram-175}, we plot the resulting \textit{out-of-task-distribution} loss averaged over the 4 sets, corresponding to test angles $\delta = 175^\circ$. In order to compare effectively with the normalized in-task-distribution results, we normalize these losses by dividing by $4$, the maximum loss possible when $\hat w, w^\star$ are on the sphere\footnote{To see this, set $\hat w = -w^\star$.}. We see that models with small $\phi$ perform poorly, and observe a diagonal boundary dividing models that generalize well and those that do not,
suggesting interplay between these two forms of task diversity. 
In \ref{fig:3phase}, we summarize these results as a phase diagram, depicting three distinct phases with a threshold of 0.01 as a cut-off between low loss (good generalization) and high loss (poor generalization):
\begin{enumerate}[topsep=0pt,itemsep=0ex,partopsep=1ex,parsep=1ex]
    \item Good generalization both in- and out-of-task-distribution (top right, purple).
    \item Good in-task-distribution generalization, poor out-of-task-distribution generalization (bottom right, yellow).
    \item Poor generalization both in- and out-of-task-distribution (left, teal); the model exhibits only in-weights learning (IWL).
\end{enumerate}

\paragraph{Scaling Dimension and Depth}\label{sec:scaling-dim}
In Fig \ref{fig:scaling}, we investigate how the transition changes with changes in $d$, the dimensionality of the regression problem. Unlike what is observed in \citet{raventos2023pretraining}, where as $d$ increases, a greater pretraining task diversity $N$ is necessary to induce ICL, here we observe that as $d$ increases the transition along the $\phi$ axis does not seem to change location. This suggests that the transition along this axis is not merely a result of high-dimensional geometry -- in low dimensions (e.g., $d=3$), the transition is still around $\phi\sim120^\circ$.

Similarly, we vary the number of layers in the transformer in Fig \ref{fig:depth} to study how the transition from specialization to generalization changes with respect to model depth. We find that depth does not affect the angle ($\phi\sim120^\circ$) of the transition for out-of-task-distribution test loss ($\delta = 175^\circ$). These models were trained using $N=2^{11}$ pretraining tasks. We further show in Fig \ref{fig:depth_app} in the appendix that the transition point remains the same across depth even for different testing angles $\delta$. These results further corroborate the phase diagram results in Fig \ref{fig:phase-diagram} and show that model depth does not affect the transition point. 

\subsection{Classification}
To investigate the generality of specialization-generalization transitions, we investigate the possibility of seeing a transition in a classification task: logistic regression. We now take the mapping between $x$ and $y$ to be:
\begin{align}
    y_i = H_{\frac{1}{2}}(\sigma(w^Tx_i))
\end{align}
where $H_{\frac{1}{2}}(\cdot)$ is the Heaviside step function with threshold $\frac{1}{2}$ and $\sigma(\cdot)$ is the logistic function. In Fig \ref{fig:logistic}, we observe a specialization-generalization transition for this task. The transition now occurs at $\phi\approx 135^\circ$. The fact that we observe a specialization-generalization transition in a classification setting hints that such transitions may be a more universal phenomenon.
\begin{figure}[t]
    \vskip 0.2in
    \centering
    \includegraphics{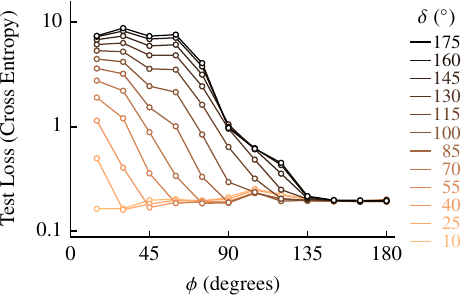}
    \caption{\textbf{Task distribution diversity induces a transition from specialized to general-purpose ICL in \textit{classification} tasks:} We now consider logistic regression. $y_i = H_{\frac{1}{2}}(\sigma(w^Tx_i))$. We see a similar transition to the one observed in the case of regression, but the transition occurs at $\phi \approx 135^\circ$.}
    \label{fig:logistic}
\end{figure}

\subsection{Nonlinear regression}
We now change the mapping between input and label for the regression to be a nonlinear function of the weights. Specifically, we consider $y_i = w_2^T\operatorname{ReLU}(W_1x_i)$, with $x_i, w_2 \in \mathbb{R}^d$ and $W_1 \in \mathbb{R}^{d\times d}$. We choose $d=3$ so that the model has $12$ parameters. In Fig \ref{fig:nonlinear}, we see that specialization-generalization transitions still occur, and investigate two ways of choosing the parameters. In Fig \ref{fig:nonlinear}A, we pick the full 12-dimensional parameter vector $\theta = \{\operatorname{vec}(W_1), w_2\}$ from the surface of $S^{11}$. This choice induces a bias towards $\norm{w_2} \ll 1$ for angles $\phi$ near the `poles' ($v=(\pm 1, \Vec{0})^T$). This bias is relaxed, however, when $\phi\sim90^\circ$, near the equator of the sphere. This leads to nonmonotonic behavior as $\delta$ changes -- the tasks near the poles are more similar to each other than to those near the equator. In Fig \ref{fig:nonlinear}A, we  only show $\delta < 90^\circ$ for this reason. In Fig \ref{fig:nonlinear_app} in the appendix, we show results for all $\delta$, and the non-monotonicity can be observed  where the red and blue lines, corresponding to $\delta$ near the poles, have lower test loss than the yellow lines, corresponding to $\delta\sim90^\circ$. 

\begin{figure}
    \vskip 0.2in
    \centering
    \includegraphics{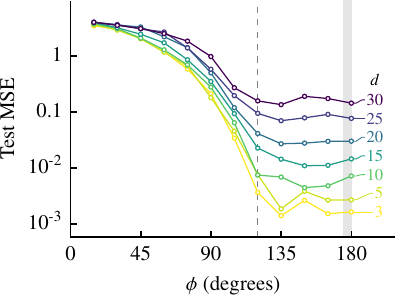}%
    \caption{\textbf{Distributional diversity threshold is unaffected by task dimension.} 
    Out-of-task-distribution test loss vs training spherical cap polar angle $\phi$ at various task dimensions $d$ (see legend). The test tasks are drawn from the spherical cap opposite to the ``pole'' of the training task distribution, $\delta = 175^{\circ}$ (shaded region).
    We see that the threshold for out-of-task-distribution generalization stays close to $\phi\sim120^\circ$ (dashed line), regardless of the task dimension. }
    \label{fig:scaling}
\end{figure}

\begin{figure}[t]
    \vskip 0.2in
    \centering
    \includegraphics[width=.8\linewidth]{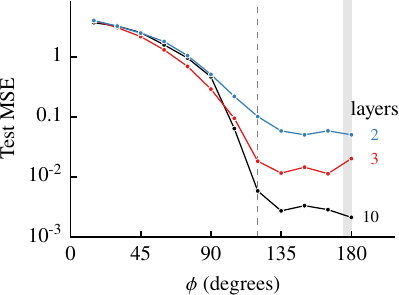}
    \caption{{\textbf{Distributional diversity threshold is unaffected
by model depth.} 
    } Out-of-task-distribution test loss vs training spherical cap polar angle $\phi$ from transformer models with two (blue), three (red), and ten (black) layers. The test tasks are drawn from the spherical cap opposite to the training task distribution, $\delta = 175^{\circ}$ (shaded region). We see that the threshold for out-of-task-distribution generalization stays close to $\phi\sim120^\circ$ (dashed line), regardless of the model depth.
    }
    \label{fig:depth}
\end{figure}

In contrast, in Fig \ref{fig:nonlinear}B, we pick from two separate hyperspheres: $\operatorname{vec}(W_1)\in S^8$ and $w_2 \in S^2$. This choice leads to a qualitatively similar transition to those we see in the linear case, with a transition around $\phi\sim135^\circ$. Observation of a specialization-generalization transition beyond the linear regime suggests that such transitions may be a general phenomenon in ICL. 


\section{Discussion and future work}
We propose another ``axis'' to training task diversity, distinct from the task diversity measure in \cite{raventos2023pretraining} (i.e., the number of pretraining tasks). This new axis of task diversity, based on the size of the subset of the task space, accounts for the similarity present between tasks. Depending on the level of task diversity present during pretraining, we have shown that transformers learn either a specialized solution that fails to generalize out-of-task-distribution, or a generic solution with good performance across the entire task space. 

While we focus on the case of learning linear tasks, the phenomena of specialization-generalization transitions are likely more general. In particular, such transitions appear also in the presence of label noise (Fig \ref{fig:transition}) and in nonlinear regression problems (Fig \ref{fig:nonlinear}). To extend our analysis to more complex tasks, it would be interesting to investigate ICL performance in richer settings as more general notions of ``task similarity'' are varied. In the context of linear problems on the sphere, the similarity between tasks $w_1$ and $w_2$ is naturally measured by their inner product $w_1^Tw_2$, but for more general problem settings it is less clear how to measure the similarity between tasks. Even within the linear setting, it may be interesting to explore task geometries beyond the sphere. 

\begin{figure}[t]
    \vskip 0.2in
    \centering
    \includegraphics{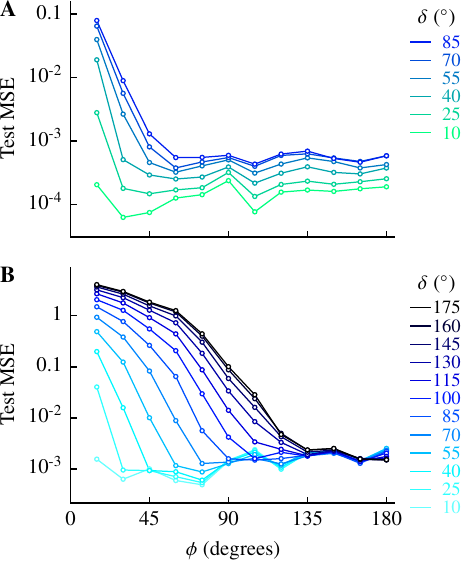}
    \caption{\textbf{Task distribution diversity induces a transition from specialized to general-purpose ICL in \textit{nonlinear} regression tasks.}
        \textbf{A:} All parameters in the nonlinear model (a small one-hidden-layer network) are drawn from the same hypersphere. The transition occurs at $\phi\approx 60^\circ$. Although the curves for each $\delta$ do not collapse onto each other after the transition, they remain within a band of size $\sim$$10^{-3}$. \textbf{B:} The parameters in the nonlinear model are drawn separately from a different hypersphere for each layer in the model. The transition occurs at $\phi\approx 135^\circ$.
    }
    \label{fig:nonlinear}
\end{figure}

\subsection{Types of distribution shift}
In this work, we considered how transformers' ICL performance responds to a domain shift in the task distribution. To make this precise, we can formulate ICL as a supervised learning problem. Let's just consider the linear regression setting for now. Our supervised learning dataset is then: 
\begin{equation}
    \mathcal{D} = \{C_n, y_n\}
\end{equation}
with $C_n = \{x_1,y_1,\ldots,x_n\}$, and $y_i = w^Tx_i$. The problem of task generalization we consider is then a special case of the domain generalization problem. To see this, notice that during training we see a given context $C_n$ according to a distribution $C_n \sim p(w)\prod_{i=1}^n p(x_i)$. The label function $p(y_n|C_n)$ is deterministic given the context: 
\begin{equation}
    p'(y_n|C_n) = \delta(y_n-w^Tx_n)
\end{equation}
When we ask about contexts with $w$ outside the support of the training distribution, what we have done is to change the data generating process to:
\begin{align}
    p(C_n,y_n) &= p'(C_n)p(y_n|C_n)\\
    &=  \qty[p'(w)\prod_{i=1}^n p(x_i)]\delta(y_n-w^Tx_n)
\end{align}

Notice that this is a special type of covariate shift: we do not allow the full distribution $p(C_n)$ to vary -- rather, we impose that 1) the sample generation distribution $p(x_i)$ remains unchanged; and that 2) we still draw $w$ independently from the samples $x_i$. It may be the case that this additional structure allows the network to more easily deal with the distribution shift. 

This is not the only type of distribution shift one may care about, however, and prior work has investigated ICL under different distribution shifts. \citet{ahuja2023a} investigate instead a domain shift of the $x_i$'s, (i.e. $p(x_i)\rightarrow p'(x_i)$ and find that transformers fail to generalize when exposed to such shifts. We suggest that this may be because the authors do not expose the model to sufficient \textit{data} diversity during pretraining. In Appendix \ref{sect:xs-cone} we investigate the effect of data diversity in the linear regression setting and see a (noisy) transition in the behavior of the model's OOD generalization. \citet{yadlowsky2024can, yadlowsky2023pretrainingdatamixturesenable} instead consider ICL under \textit{concept shift}: the mapping of data to labels $p(y_n|C_n)$ changes between train and test. The authors here find that transformers largely fail to generalize to such distribution shifts, even when exposed to a diversity of tasks during pretraining. \citet{hill2025transformersdontincontextlearn} consider ICL of linear functions where task vectors at test time are orthogonal to those seen during training, and again see that transformers fail to generalize to this distribution shift. We expect that this is because the natural task similarity measure for linear tasks, the inner product, is zero for orthogonal tasks. The ability our models display in successfully adapting to distribution shift between train and test time may therefore depend on the type of distribution shift considered, in addition to task/data diversity thresholds.

\subsection{Future directions}

A natural next step would be to develop an analytic theory for analyzing specialization-generalization transitions in transformers, similar to the analysis \citet{lu2024asymptotictheoryincontextlearning} performed for the setting studied in \citet{raventos2023pretraining}. Such a model could provide further insight into the conditions   necessary for out-of-task-distribution ICL to emerge. Additionally, it would be interesting to study the dynamics of how out-of-task-distribution ICL emerges during training, shedding light on the learning processes \& implicit biases of the optimization algorithms that enable models to generalize beyond their training task distribution.

Our experiments open a new direction for understanding how general-purpose models are able to solve unseen tasks using only a few examples in their context: we show empirically that transformers can learn to do ICL over much more of the task space than they are trained on. Understanding the generality of this behavior may help explain why language models are able to perform well on ICL tasks not present in their pretraining distribution. Although our experiments here are limited by their focus on relatively simple functions as the ICL task, we believe investigations into specialization-generalization transitions for more complex tasks are a promising direction for future study. Building trust in LLMs is an important challenge with positive societal impacts, and understanding the degree and nature of task generalization via ICL takes a step towards this goal.

\section*{Impact statement}
We present work on linear regression ICL tasks with the goal of understanding more broadly the factors that lead to the development of ICL behavior. While we focus on simplified experimental settings compared to the more complex ICL phenomena seen in LLMs, the impact of understanding the underpinnings of specialization and generalization in ICL will provide insights into the learning dynamics of LLMs, building trust in AI and producing positive consequences for society. 

\section*{Acknowledgments}
CG acknowledges travel support from the National Science Foundation and by DoD OUSD (R\&E) under Co-operative Agreement PHY-2229929 (The NSF AI Institute for Artificial and Natural Intelligence). LMS is supported by the National Science Foundation Graduate Research Fellowship Program under Grant No. DGE-2039656. VN acknowledges research funds from the University of Sydney. 
DJS was partially supported by a Simons Fellowship in the MMLS, a Sloan Fellowship, and the National Science
Foundation, through the Center for the Physics of Biological Function (PHY-1734030).

\bibliography{icml2025}
\bibliographystyle{icml2025}
\clearpage
\appendix
\section{Appendix / supplemental material}
\subsection{Further experimental details \label{sect:more-expt}} 
All code was written in Python using the PyTorch library \cite{pytorch}. All models were trained for 58,000 steps using a batch size of 128 and a constant learning rate of $3\times 10^{-4}$. All models were converged at the end of training. All models were trained on a single GPU, either a MIG GPU with 10GB of memory or an A100 with 40GB of memory, and took $\sim3$hrs to train.

\begin{figure}[!h]
    \centering
    \includegraphics[width=0.9\linewidth]{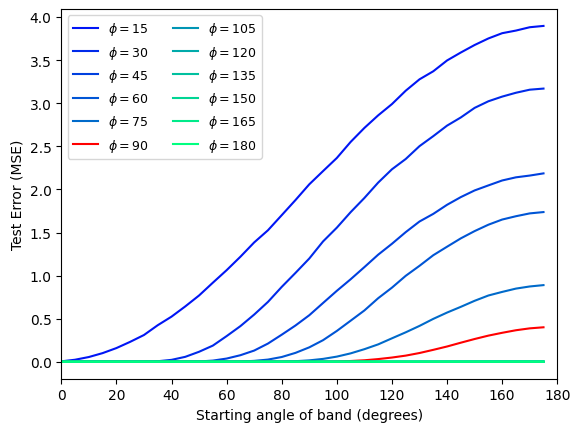}
    \caption{\textbf{A second run of Fig \ref{fig:transition}, with a different initialization and sampling of $w\sim p_\phi$.}}
    \label{fig:run2}
\end{figure}

\begin{figure}[!ht]
    \centering
        \includegraphics[width=0.9\linewidth]{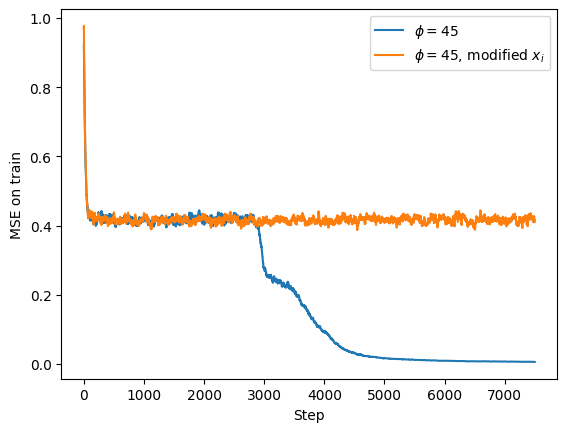}
    \caption{\textbf{Transformers undergo two stages of specialization during training:} \textbf{A:} For small $\phi$, the transformer rapidly (within the first epoch) learns a solution that only takes into account the component of $x_i$ in the direction of the vector $v$ forming the center of the hyperspherical cap. \textit{Blue:} A transformer trained normally on training data with $\phi = 45^\circ$. \textit{Orange:} A transformer trained on data with the components of $x_i$ perpendicular to $v$ zeroed out. The training loss is smoothed with an exponential moving average for clarity of visualization.
    }
    \label{fig:spec2}
\end{figure}

\subsection{Two stages of specialization during training \label{sect:spec2}}

In Fig \ref{fig:spec2}, we compare the training loss of a transformer trained normally on data with $\phi = 45^\circ$ to a transformer trained on modified data. To modify the data, we zero out all components of $x_i$ that are perpendicular to the vector $v$ defining the center of the hyperspherical training cap. We see that during early stages of training, the transformer trained on unmodified data performs similarly to the transformer trained on modified data, suggesting that early in training, transformers trained to do linear regression only take into account the component of $x_i$ parallel to $v$. Later in training, the unmodified transformer learns to take into account other directions in the training data. This suggests that there are two distinct specialized solutions learned by transformers when $\phi$ is small, the first of which is transient and disappears after training long enough.

\subsection{Defining the transition point}
To more precisely define the transition point, we quantify the degree to which a model trained with a given $\phi$ performs similarly across test angles $\delta$. The intuition for this definition is that for a model with $\phi$ above the transition point, we should expect similar performance across all $\delta$ due to the rotational symmetry in the linear regression problem. To quantify this, we measure the standard deviation of the model's performance across $\delta$, and normalize this by the mean performance across $\delta:$
\begin{align}
    \text{NSR} = \frac{\sqrt{\operatorname{Var}_\delta\qty[\mathbb{E}_{w\sim p_{\delta, \Delta \delta}}[(T_\theta(C_n)-y_n)^2]]}}{\mathbb{E}_\delta\qty[\mathbb{E}_{w\sim p_{\delta, \Delta \delta}}[(T_\theta(C_n)-y_n)^2]]}\label{eqn:NSR}
\end{align}
This definition resembles an inverse signal-to-noise ratio, or ``NSR''. To identify the transition point, we then look for a sharp drop in the NSR. In Fig \ref{fig:NSR-1}, we plot the NSR against $\phi$ for the models in Fig \ref{fig:transition}A. We see a sharp drop in the NSR on a logarithmic scale, and identify which phase of learning we are in by thresholding the NSR. The fact that the drop is sharp means that our phase identification is not very sensitive to our choice of threshold. 
\begin{figure}
    \centering
    \includegraphics[width=0.9\linewidth]{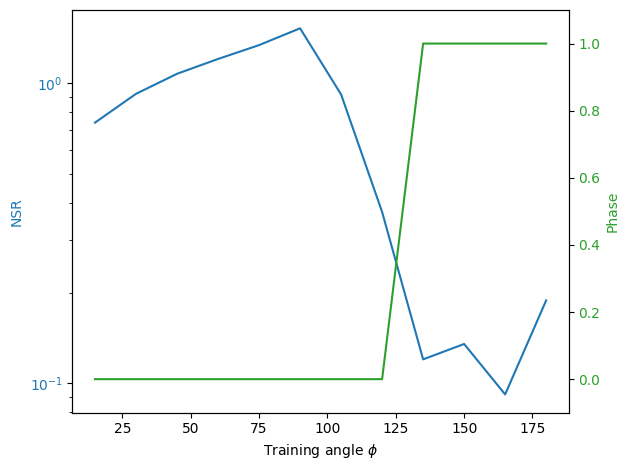}
    \caption{\textit{Blue:} NSR (Eqn \ref{eqn:NSR}) vs $\phi$ for the models in Fig \ref{fig:transition}A. \textit{Green:} Phase identification by thresholding the NSR, with a threshold of 0.5.}
    \label{fig:NSR-1}
\end{figure}

\subsection{Comparison with dMMSE estimator \label{sect:dmmse}}
Plugging the uniform distribution over a finite pretraining set $\mathcal{W} = \{w_1,w_2\ldots,w_N\}$ into Eqn \ref{eqn:opt-w} yields the discrete minimum mean-squared error (dMMSE) estimator (for the case of zero label noise):
\begin{equation}
    \hat w_\text{dMMSE} = \operatorname{arg}\min_{w\in \mathcal{W}}\sum_{i=1}^n\qty(w^Tx_i-y_i)^2  \label{eqn:dMMSE}
\end{equation}
In Fig \ref{fig:interp}A, we investigate the in-distribution performance of pretrained transformers by comparing their performance to a dMMSE estimator. To do this, we interpolate between tasks in $\mathcal{W}$ along a great circle, and test both the transformer and dMMSE along this path. For a low number of tasks, transformers perform about as well as dMMSE, but transformers outperform dMMSE for larger numbers of tasks. 

In Fig \ref{fig:interp}B, we plot a heatmap of the quantity:
\begin{equation}
    D = \max_\alpha\qty[\qty(\hat w_\text{dMMSE}^Tx_\alpha - y_\alpha)^2-\qty(T_\theta(x_\alpha) - y_\alpha)^2]\label{eqn:excess}
\end{equation}
where $x_\alpha, y_\alpha$ is the regression problem generated by the interpolant weight vector $w_\alpha$ at interpolation step $\alpha$. The quantity $D$ measures the performance of the transformer relative to the dMMSE estimator: when $D<0$, the transformer outperforms dMMSE. In the heatmap, we see that increasing number of tasks drives the transformer to a solution that outperforms dMMSE for all $\phi$. In order to compare across different values of $\phi$, we normalize $D$ using the scheme described in section \ref{sect:phase}. Notice that this normalization aligns the transition points for the transition from IWL to ICL across increasing number of tasks. 

\begin{figure}[!ht]
    \centering
    \begin{subfigure}[t]{1.0\linewidth}
        \centering
        \includegraphics[width=0.9\linewidth]{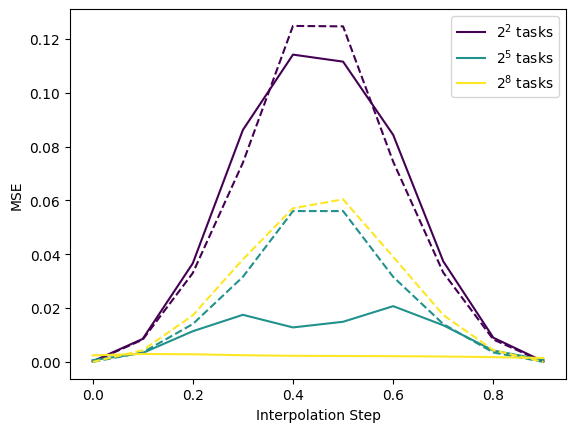} 
        \caption{} 
    \end{subfigure}
    \begin{subfigure}[t]{1.0\linewidth}
        \centering
        \includegraphics[width=0.9\linewidth]{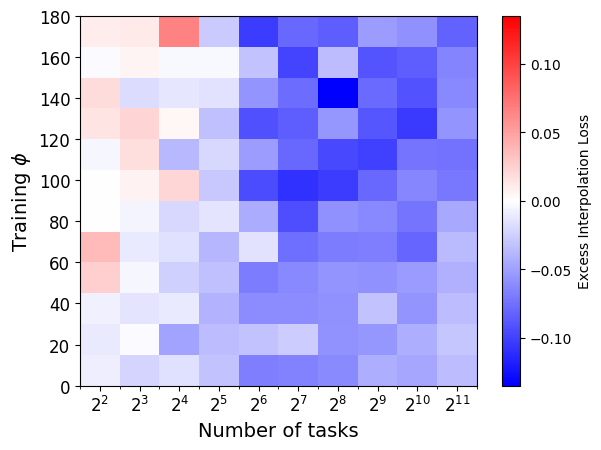} 
        \caption{} \label{fig:excess}
    \end{subfigure}
    \caption{\textbf{In-task-distribution generalization.} \textbf{A:} \textit{Solid:} The loss for a transformer with $\phi = 30^\circ$ along a great circle connecting two weight vectors in the pretraining set. \textit{Dashed:} The loss along the same great circle for the dMMSE estimator (Eqn \ref{eqn:dMMSE}) corresponding to the pretraining set. \textbf{B:} Transformers outperform dMMSE for all $\phi$ with sufficiently many pretraining tasks. The color shows the excess loss of the transformer over the dMMSE estimator. When it is negative, the transformer outperforms dMMSE (see Eqn \ref{eqn:excess}).}
    \label{fig:interp}
\end{figure}

\subsection{Sampling from a portion of the hypersphere\label{sect:sampling}}
The curse of dimensionality precludes sampling from a portion of the hypersphere via rejection sampling. Instead, we consider the problem of sampling uniformly from the intersection of the sphere with a cone in $d+1$ dimensions: i.e. sampling from the sphere $S^d(R)$ subject to the restriction:
\begin{align}
    \norm{w - v}^2 \geq r^2 \Leftrightarrow \operatorname{angle}(w, v) \leq \theta \Leftrightarrow w^Tv \geq R^2 - r^2/2
\end{align}
where $w$ is our sampled vector and $v$ is a fixed vector that defines the hypercone. The opening half-angle of the cone $\theta$ is related to $r$ via $r^2 = 2R^2\qty(1-\cos\theta)$.

WLOG, we take $v = R\hat e_1 = (R \ \vec 0)$, where $\vec 0 \in \mathbb{R}^d$. Then, writing $w = (Z \ y)$ with $Z \in \mathbb{R}$ and $y \in \mathbb{R}^d$, we see that 
\begin{align}
    RZ \geq R^2 - r^2/2
\end{align}
We will show that, up to this restriction, $Z = R(2U-1)$, with $U\sim \operatorname{Beta}(\frac{d}{2},\frac{d}{2})$. To generate $y$, we first draw $Z$, then draw a vector uniformly from the sphere $S^{d-1}(\sqrt{R^2 - Z^2})$
\begin{proposition}
    $Z = R(2U-1)$, with $U\sim \operatorname{Beta}(\frac{d}{2},\frac{d}{2})$
\end{proposition}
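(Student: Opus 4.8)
The plan is to compute the marginal density of the first coordinate $Z$ of a point $w$ drawn uniformly from the sphere $S^d(R) \subset \mathbb{R}^{d+1}$, and then recognize it as an affine rescaling of a $\operatorname{Beta}(d/2, d/2)$ law. Crucially, the conical restriction $RZ \geq R^2 - r^2/2$ only \emph{truncates} the support of $Z$ to an interval; it does not change the \emph{shape} of the density on that interval, since conditioning a random variable on lying in a subinterval just renormalizes its density there. So it suffices to identify the \emph{unconditioned} marginal of $Z$ and observe that the statement "$Z = R(2U-1)$ with $U \sim \operatorname{Beta}(d/2,d/2)$" is a statement about this shape; the two-step sampling recipe (draw $Z$, then draw $y$ uniformly on $S^{d-1}(\sqrt{R^2 - Z^2})$) is then justified by the standard fact that a uniform point on a sphere, conditioned on its first coordinate, is uniform on the corresponding lower-dimensional sphere of the appropriate radius.

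First I would reduce to the unit sphere by scaling, so WLOG $R = 1$ and $w \in S^d$. Next I would write down the marginal density of $Z = w_1$ for $w \sim \operatorname{Unif}(S^d)$. The cleanest route is the well-known slab/area formula: the surface measure of the "latitude slice" $\{w \in S^d : w_1 \in [z, z+dz]\}$ is proportional to $(1 - z^2)^{(d-2)/2}\, dz$ for $z \in [-1, 1]$ — this is because the slice is a scaled copy of $S^{d-1}$ with radius $\sqrt{1-z^2}$, contributing a factor $(1-z^2)^{(d-1)/2}$ from its area, divided by $\sqrt{1-z^2}$ from the Jacobian of the arclength element along the axis, i.e. $(1-z^2)^{(d-1)/2 - 1/2} = (1-z^2)^{(d-2)/2}$. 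Hence
\begin{equation}
    p_Z(z) \propto (1-z^2)^{\frac{d-2}{2}}, \qquad z \in [-1,1].
\end{equation}
Alternatively, one can derive this from the Gaussian representation $w = g/\|g\|$ with $g \sim \mathcal{N}(0, I_{d+1})$, giving $Z^2 = g_1^2 / (g_1^2 + \cdots + g_{d+1}^2) \sim \operatorname{Beta}(1/2, d/2)$, and then symmetrizing.

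Then I would perform the change of variables $z = 2u - 1$, so $u = (z+1)/2 \in [0,1]$ and $1 - z^2 = (1-z)(1+z) = 4u(1-u)$, giving $p_U(u) \propto \big(u(1-u)\big)^{(d-2)/2} = u^{d/2 - 1}(1-u)^{d/2 - 1}$, which is exactly the $\operatorname{Beta}(d/2, d/2)$ density up to the normalizing constant; matching constants is routine via the Beta function and can be skipped. Finally I would note that imposing the cone constraint $RZ \geq R^2 - r^2/2$ corresponds to restricting $U$ to $[u_0, 1]$ for the appropriate $u_0$, and the conditional density is the same Beta density renormalized on that interval — so the proposition's sampling scheme indeed produces the correct conditional law, with the orthogonal component $y$ handled by the sphere-within-sphere uniformity fact mentioned above.

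The main obstacle is getting the slab-area/Jacobian exponent exactly right: it is easy to be off by one in the power of $(1-z^2)$ and end up with $\operatorname{Beta}(d/2 \pm 1, \cdot)$ rather than $\operatorname{Beta}(d/2, d/2)$. I would guard against this by cross-checking with the Gaussian-ratio derivation (which pins down $Z^2 \sim \operatorname{Beta}(1/2, d/2)$ unambiguously) and by sanity-checking the low-dimensional case $d=1$ (the circle $S^1$), where $Z = \cos\Theta$ with $\Theta$ uniform gives the arcsine law $p_Z(z) \propto (1-z^2)^{-1/2}$, matching $(1-z^2)^{(d-2)/2}$ at $d=1$ and corresponding to $\operatorname{Beta}(1/2,1/2)$, exactly as the proposition predicts.
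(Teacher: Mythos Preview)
Your proposal is correct and takes essentially the same approach as the paper: both compute the marginal density of the first coordinate via the surface area of a latitude slab, obtaining $p(Z)\propto (R^2-Z^2)^{(d-2)/2}$ from the $(d{-}1)$-sphere area times the slant-height Jacobian, and then change variables $Z=R(2u-1)$ to recognize the $\operatorname{Beta}(d/2,d/2)$ density. The only differences are cosmetic---the paper phrases the slab as a conical frustum and keeps $R$ general rather than setting $R=1$---while your Gaussian-ratio cross-check and $d=1$ sanity check are additional safeguards the paper omits.
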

\begin{proof}
    Observe that $p(Z)\dd{Z}$ is proportional to the ($d$-dimensional) surface area of the conical frustum with height $\dd{Z}$ and radii $\sqrt{R^2 - Z^2}$, $\sqrt{R^2 - (Z+\dd{Z})^2}$. Expanding to first order in $\dd{Z}$, the slant height $\ell$ of the frustum is given by:
    \begin{align}
        \ell^2 = (\dd{Z})^2\qty[\frac{R^2}{R^2-Z^2}]
    \end{align}
    so that:
    \begin{align}
        p(Z)\dd{Z} &\propto \ell \qty(\sqrt{R^2-Z^2})^{d-1} \\
        &\propto (R^2-Z^2)^{(d-2)/2}\dd{Z}
    \end{align}
    Making the change of variables $Z = R(2u-1)$, we obtain:
    \begin{align}
        p(u)\dd{u} &\propto (u-u^2)^{(d-2)/2}\dd{u} \\
        &\propto u^{\frac{d}{2}-1}(1-u)^{\frac{d}{2}-1}\dd{u}
    \end{align}
    which has the form of a $\operatorname{Beta}(\frac{d}{2},\frac{d}{2})$ random variable. 
\end{proof}
\subsubsection{Sampling $Z$}
As we have seen, $Z$ follows a scaled \& shifted $\operatorname{Beta}(\frac{d}{2},\frac{d}{2})$ distribution, except for the constraint that $RZ\geq R^2-r^2/2$. To implement this constraint, observe first that it is equivalent to:
\begin{align}
    U \geq 1-\frac{r^2}{4R^2}
\end{align}
Since the distribution of $U\sim \operatorname{Beta}(\frac{d}{2},\frac{d}{2})$ is symmetric about 1/2, this condition is equivalent to:
\begin{align}
    U \leq \frac{r^2}{4R^2}
\end{align}
It follows that to sample from $Z$, we can perform the following sequence of transformations:
\begin{align}
    T &\sim \operatorname{Unif}\qty(0,F\qty(\frac{r^2}{4R^2})) \\
    U &= F^{-1}(T) \\
    Z &= R(2U-1)
\end{align}
where $F(\cdot)$ is the cdf of the $\operatorname{Beta}(\frac{d}{2},\frac{d}{2})$ distribution. 
\subsubsection{Minimum angles\label{sect:sample-min}}
It is straightforward to implement an additional constraint corresponding to a minimum distance/angle away from the vector $v$. If $r'$ is this minimum distance, then we perform: 
\begin{align}
    T &\sim \operatorname{Unif}\qty(F\qty(\frac{(r')^2}{4R^2}),F\qty(\frac{r^2}{4R^2})) \\
    U &= F^{-1}(T) \\
    Z &= R(2U-1)
\end{align}



\subsection{Data diversity thresholds \label{sect:xs-cone}}
In this section we investigate the level of diversity necessary in the \textit{data}: i.e. when we instead draw $x_i \sim S^{d-1}(\phi)$, with $w$ drawn \textit{uniformly} over the \textit{entire} unit sphere. In Fig \ref{fig:xs-cone}, we show the results. We still observe a transition, suggesting that sufficient diversity is also required in the data in order to generalize out-of-distribution. We notice that the transition is much noisier than that observed in Fig \ref{fig:transition}. This difference in behavior may be due to the fact that the model directly observes $x_i$ in the context, whereas the task vector $w$ is a latent variable. 

\begin{figure}
    \centering
    \includegraphics[width=0.9\linewidth]{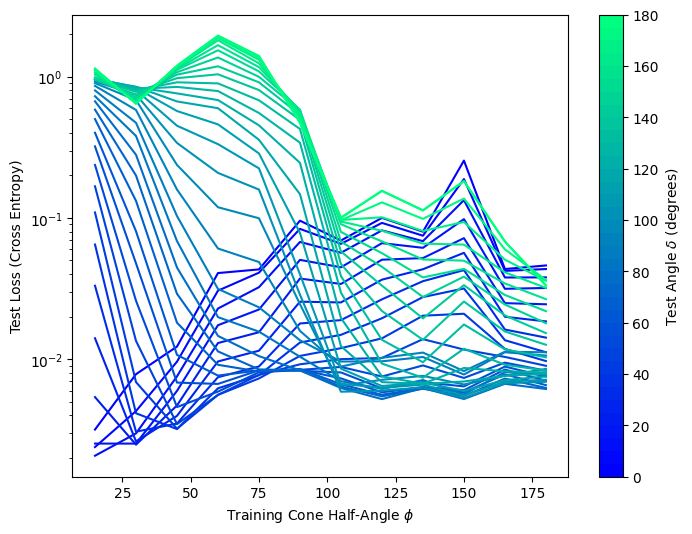}
    \caption{\textbf{Data diversity transition:} When instead drawing $x_i$ from a hyperspherical cap, we see a (noisy) specialization-generalization transition at $\phi \approx 105^\circ$.}
    \label{fig:xs-cone}
\end{figure}

\begin{figure}
    \centering
    \includegraphics[width=0.9\linewidth]{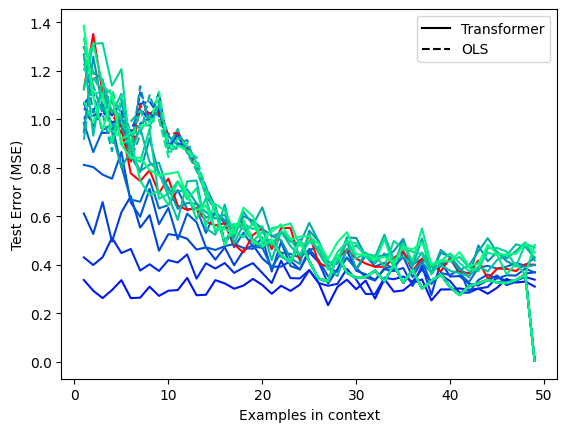}
    \caption{\textbf{In-distribution test error vs context length for the noisy case ($\sigma^2=\frac{1}{4}$).}}
    \label{fig:noisy-context}
\end{figure}
\begin{figure}
    \centering
    \includegraphics[width=0.9\linewidth]{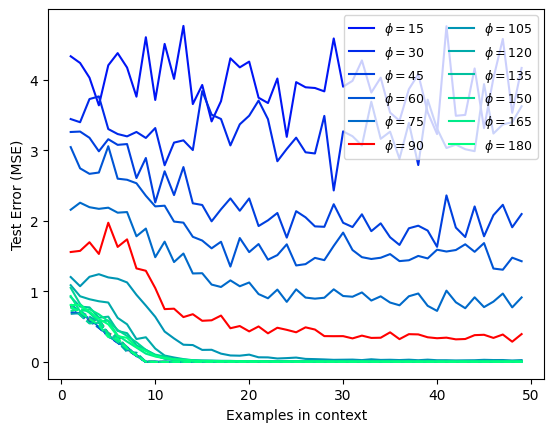}
    \caption{\textbf{Out-of-distribution ($\delta=175^\circ$) test error vs context length.}}
    \label{fig:ood-context}
\end{figure}

\subsection{Different Context Lengths}
In Figures \ref{fig:noisy-context} and \ref{fig:ood-context}, we investigate models trained with various $\phi$ across many context lengths in different settings to the setting in Fig \ref{fig:contextlength}.

\subsection{Supplemental plots for section \ref{sect:phase}}
In Fig \ref{fig:depth_app}, we show the data from Fig \ref{fig:depth}, but for all $\delta$, highlighting that the transition point does not change with increasing model depth. 

In Fig \ref{fig:nonlinear_app}, we reproduce Fig \ref{fig:nonlinear}, but show many values of $\delta$, highlighting the non-monotonic behavior that arises as a result of the rotational symmetry of the sphere being violated by the nonlinearity.


\begin{figure}[b]
    \centering
    \includegraphics[width=0.8\linewidth]{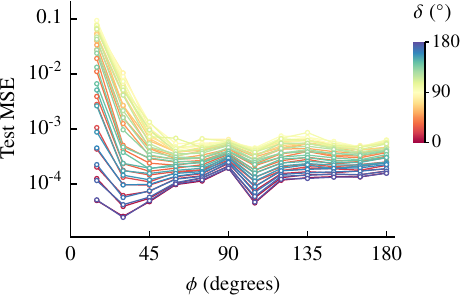}
    \caption{
        {Same as Fig~\ref{fig:nonlinear}A, but shows all values of $\delta$. Notice how the behavior of the test loss is non-monotonic as $\delta$ increases.}
    }
    \label{fig:nonlinear_app}
\end{figure}

\begin{figure}
    \centering
    \includegraphics[width=0.95\linewidth]{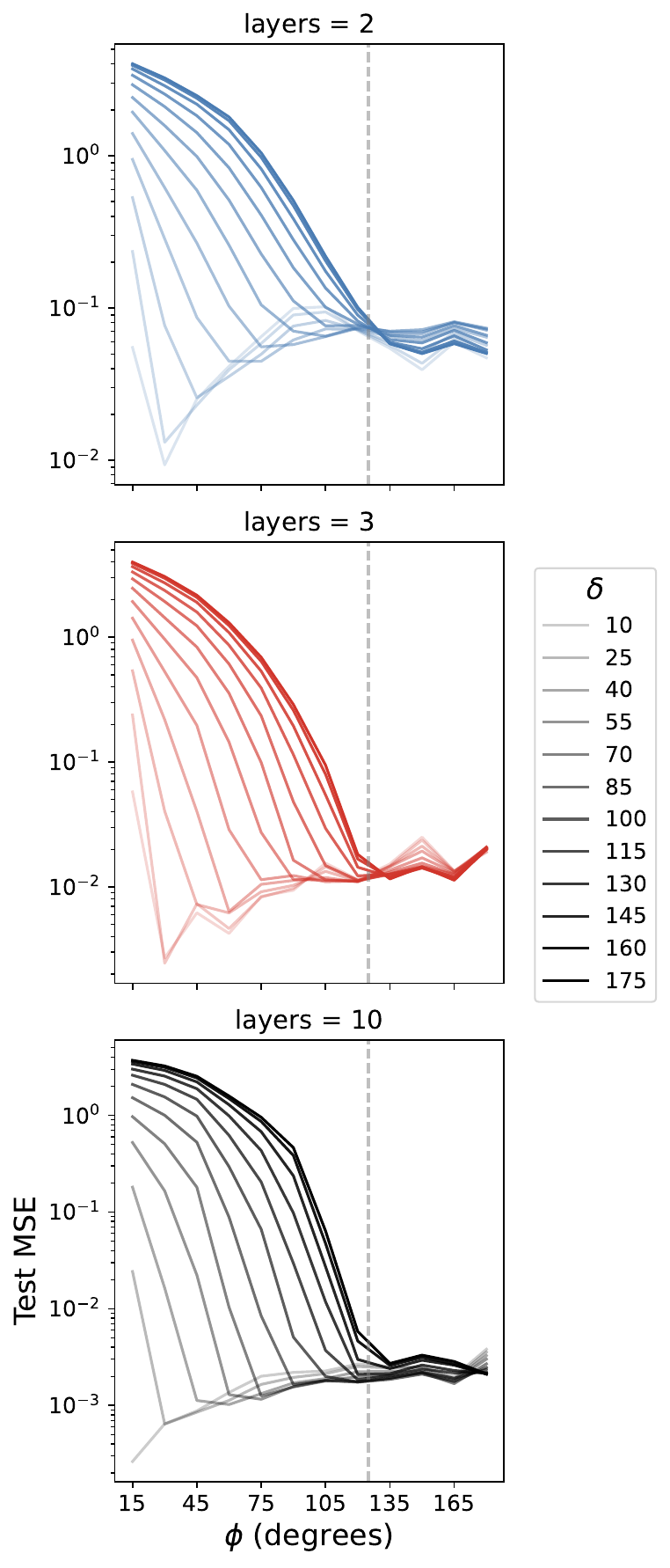}
    \caption{{\textbf{Distributional diversity threshold is unaffected
by model depth, even across test angles.} 
    } Test loss vs training spherical cap polar angle $\phi$ from transformer models with two (blue), three (red), and ten (black) layers. We show results for test tasks drawn from $\Delta\delta = 5^\circ$ bands from $\delta = 15^\circ$ to $175^\circ$, where lighter shades indicate smaller $\delta$. We see that the threshold for out-of-task-distribution generalization stays close to $\phi\sim120^\circ$, regardless of the model depth.
    }
    \label{fig:depth_app}
\end{figure}

\end{document}